\documentclass[letterpaper]{article}
\usepackage[preprint]{aaai2027}
\usepackage[hyphens]{url}
\usepackage{graphicx}
\usepackage{natbib}
\usepackage{caption}
\usepackage{amsmath}
\usepackage{amssymb}
\usepackage{booktabs}
\usepackage{multirow}
\usepackage{bm}
\usepackage{algorithm}
\usepackage{algorithmic}

\newcommand{\fs}{f_s}
\newcommand{\fl}{f_l}

\newcommand{\Dcal}{\mathcal{D}_{\mathrm{cal}}}

\newcommand{\Dval}{\mathcal{D}_{\mathrm{val}}}
\newtheorem{proposition}{Proposition}

\title{Calibration-Aware Uncertainty Cascades for \\ Efficient Heterogeneous Model Collaboration}
\author{
    Yilin Zhang\equalcontrib,
    Han Jiang\equalcontrib,
    Cai Xu,
    Ying Liu,
    Wei Zhao
}
\affiliations{
    School of Computer Science and Technology, Xidian University\\
    Xi'an, China\\
    ylzhang\_3@stu.xidian.edu.cn, han522708@gmail.com, cxu@xidian.edu.cn,\\
    ying210281@163.com, ywzhao@mail.xidian.edu.cn
}

\begin{document}

\maketitle

\begin{abstract}


Heterogeneous model collaboration seeks to exploit the complementary strengths of different models to balance predictive performance and inference cost. Existing approaches typically rely either on trained routers, which tie routing decisions to a fixed task and model pool, or on raw-confidence cascades, whose thresholds lack consistent reliability semantics across heterogeneous models. Consequently, these approaches adapt poorly to changing model pools and deployment budgets.
We propose Calibration-Aware Uncertainty Cascades (CAUC), a simple post-hoc framework that independently calibrates each model's confidence and selects deployment policies using validation data. The resulting calibrated confidence scores establish a common reliability scale for accepting an early prediction, invoking a stronger model, or selectively combining model outputs. This unified decision criterion decouples deployment policies from any particular model pool or operating budget. We further show theoretically that calibration gives confidence thresholds an explicit selective-risk interpretation, whereas uncalibrated scores offer no comparable reliability guarantee.
Extensive experiments demonstrate that, across six language benchmarks, CAUC achieves an average relative accuracy improvement of 1.9\% over strong-model-only inference while avoiding approximately 47\% of strong-model calls. 
On image classification benchmarks, it maintains or improves predictive performance while reducing measured GFLOPs by up to 57\%.

\end{abstract}

\section{Introduction}


Modern AI deployments rely on heterogeneous models with different capabilities and computational costs. Compact models can handle many inputs efficiently on resource-constrained devices or low-cost servers, whereas larger models generally provide stronger predictive performance at substantially higher latency and computation. Always invoking the strongest model is therefore wasteful when less expensive models can already solve many inputs correctly. The central deployment question is how much computation each input should receive: whether to stop with the current model, invoke a stronger one, or combine complementary predictions. This trade-off arises in both edge--cloud vision systems that coordinate device and server inference and LLM services that select among models with different capabilities and prices \citep{kag2023efficient,TMLR2024frugalgpt}. The objective is consequently not accuracy alone, but predictive performance under a computation or service budget.

A common approach is to train a router that selects which model should process each query. Recent routers learn query--model compatibility from benchmark outcomes, preference data, or query and model representations, sometimes using contrastive objectives \citep{shnitzer2024routing,ong2025routellm,NIPS2024routerDC,ICLR2025embedllm}. Although these methods can exploit rich supervision, their decision rules are typically coupled to the training tasks, candidate model pool, and cost definition. Adding or replacing a model, or changing the deployment budget, may therefore require new performance labels and router retraining. This dependence limits their flexibility in model libraries whose members and operating costs evolve over time.

Uncertainty-based cascades provide a lightweight alternative: models are evaluated in increasing order of cost, and a confidence threshold determines whether the system stops or invokes a stronger model \citep{jitkrittum2023confidence,gupta2024cascades,ramirez2024optimising}. However, raw confidence is often an unreliable proxy for correctness. The same score can correspond to substantially different empirical accuracies across models and datasets, making the behavior of a shared threshold unpredictable. Alternatively, tuning a separate threshold for every model complicates model replacement and cascade expansion. Conventional cascades also discard earlier predictions after deferral, even when their errors are complementary to those of later models.

\begin{figure*}[t!]
\centering
\includegraphics[width=0.95\textwidth]{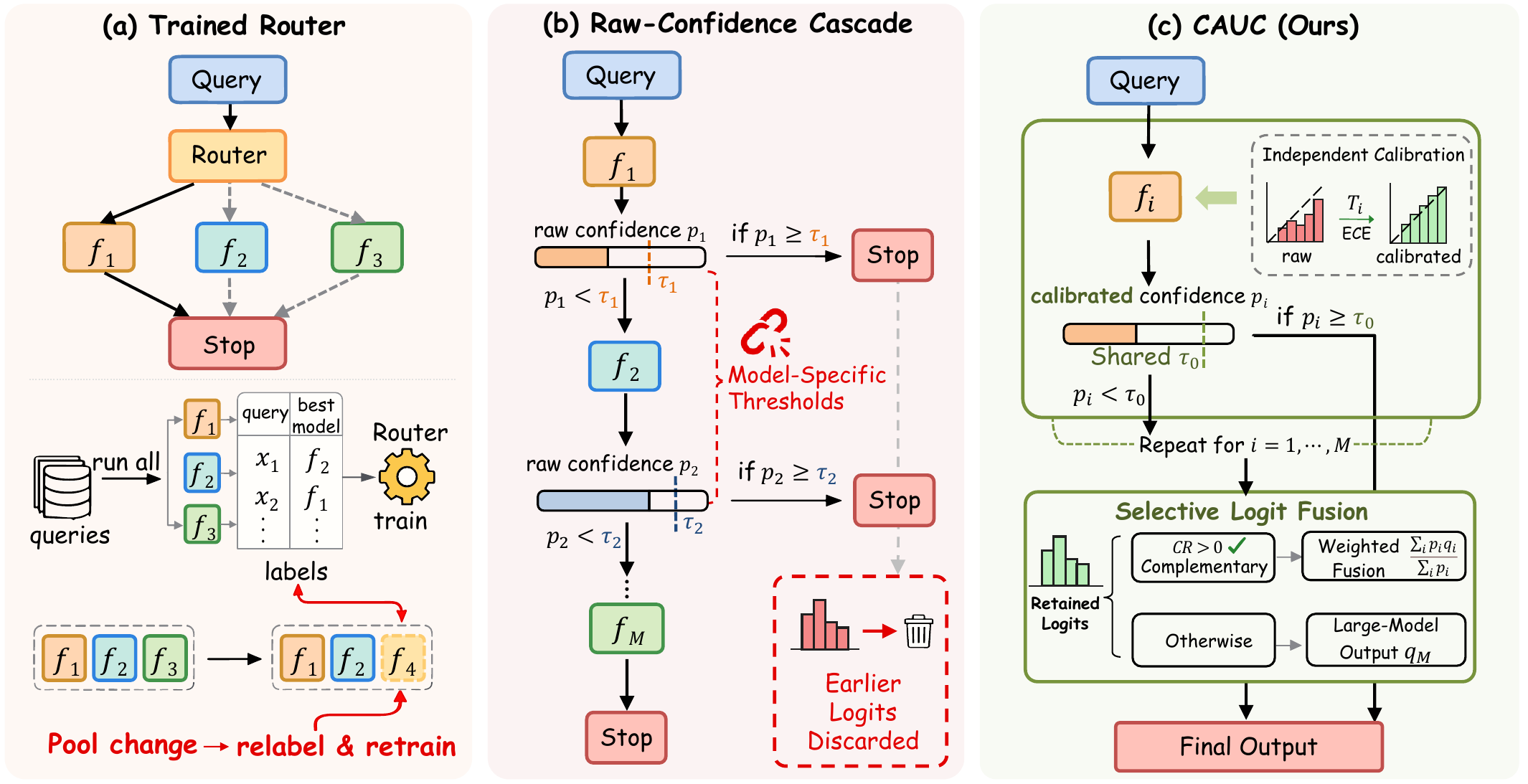}
\caption{
\textbf{Comparison of collaboration paradigms for heterogeneous models.}
(a) Trained routers need labeled routing data and retraining after model-pool changes.
(b) Raw-confidence cascades use model-specific thresholds and discard earlier predictions upon deferral.
(c) CAUC independently calibrates models to enable a shared threshold across the cascade. It selectively fuses retained logits when they are complementary; otherwise, it uses the final model's prediction.
}
\label{fig:motivation}
\end{figure*}


To address these limitations, we propose Calibration-Aware Uncertainty Cascades (CAUC), a post-hoc framework for efficient heterogeneous model collaboration. CAUC independently calibrates the confidence of each model, establishing a common reliability scale for accepting an early prediction or invoking a stronger model. For deferred inputs, CAUC selectively combines model outputs when their predictions are complementary. This separation between model-specific reliability estimation and deployment policy allows models to be replaced or added through lightweight post-hoc calibration and policy updates, without training a task-specific router or tuning a separate stopping threshold for every model. We further extend CAUC to multi-model cascades through recursive fusion, allowing predictions from earlier models to contribute to the final decision rather than being discarded after deferral.



Our contributions are summarized as follows:
\begin{itemize}
    \item We propose CAUC, a post-hoc framework that independently calibrates heterogeneous models and uses calibrated confidence as a common reliability scale for early acceptance, model escalation, and selective output fusion.

    \item By aligning confidence semantics across heterogeneous models, CAUC accommodates model replacement and cascade expansion through lightweight calibration and policy updates, rather than router retraining or joint retuning of model-specific thresholds.


    \item We provide theoretical justification for calibrated confidence thresholding, showing that calibration gives thresholds an explicit selective-risk interpretation and supports near-optimal decisions under a cost-sensitive objective.
    

    \item Extensive experiments on 6 LLM benchmarks and 3 image classification datasets demonstrate that CAUC achieves superior performance-cost trade-offs across diverse model combinations and deployment budgets.
\end{itemize}

\section{Related Work}

\paragraph{Model Routing.} 

Model routing selects one model from a candidate pool before inference. Existing approaches differ mainly in how they estimate query--model compatibility. Benchmark Routing learns correctness predictors from model outcomes on benchmark datasets, whereas RouteLLM derives pairwise routing decisions from preference data \citep{shnitzer2024routing,ong2025routellm}. Representation-based routers seek more transferable compatibility signals: RouterDC jointly learns query and model embeddings through dual contrastive objectives, while GraphRouter explicitly models relations among tasks, queries, and LLMs as a heterogeneous graph \citep{NIPS2024routerDC,ICLR2025graph}. EmbedLLM decouples reusable model representations from downstream routing decisions, whereas RadialRouter uses a lightweight structured encoder to capture query--model relations efficiently \citep{ICLR2025embedllm,jin2025radialrouter}. Collectively, these methods exploit specialization within a model pool, but depend on an auxiliary compatibility predictor learned for particular tasks and candidate models. CAUC addresses a different decision stage: rather than selecting a model before inference, it decides whether to stop after observing a prediction, using calibrated correctness estimates without training a separate query router.

\paragraph{Cascades and Adaptive Invocation.}
Model cascades invoke models in increasing order of cost and stop once an intermediate prediction is considered reliable. This principle appears in adaptive image inference through learned exits or transitions between classifiers, and in language-model systems that optimize service order and stopping policies under a budget \citep{bolukbasi2017adaptive,huang2018multiscale,TMLR2024frugalgpt,dekoninck2025unified}. The central design problem is the deferral criterion. Confidence-based deferral is effective only when the score is sufficiently related to correctness and downstream recoverability \citep{jitkrittum2023confidence}; recent LLM cascades therefore study token-level uncertainty, post-hoc representations, and generation margins as alternatives to a neural query router \citep{gupta2024cascades,ramirez2024optimising}. These approaches reduce strong-model calls, but their decision statistics and suitable thresholds can vary substantially across tasks and models. Conventional cascades also replace the early prediction after deferral, even when its errors are complementary to those of the stronger model. CAUC follows the cascade paradigm but calibrates the stopping statistic explicitly and retains selective fusion only as an action for deferred, ambiguous examples.



\paragraph{Confidence Calibration}
Uncertainty-based collaboration requires more than ranking easy and difficult inputs: a threshold should have a consistent reliability meaning. Post-hoc calibration aligns predictive confidence with empirical correctness without retraining the base model. Temperature scaling provides a strong baseline for image classifiers, while adaptive temperature scaling extends this idea to language-model option logits \citep{guo2017calibration,EMNLP2024calibrating}. Selective prediction complements calibration by measuring the risk among accepted examples as coverage changes \citep{geifman2017selective}; ensemble and fusion methods further show that complementary predictions can improve uncertainty when additional computation is available \citep{jiang2023llmblender,liu2025coolfusion,zhou2025asymmetric}. Prior cascade studies primarily use uncertainty as a ranking or deferral signal. CAUC instead treats calibrated correctness probability as a common interface across heterogeneous models, giving its stopping threshold an explicit selective-risk interpretation and allowing the policy to be retuned without jointly retraining the models or a router.

\section{Calibration-Aware Uncertainty Cascade}

\subsection{Problem Formulation}

We consider a prediction task with a finite candidate label set $\mathcal{Y}$, where $K=|\mathcal{Y}|$. Each input $x$ is paired with a target $y\in\mathcal{Y}$. In image classification, the candidates are visual classes; in multiple-choice question answering, they are the answer options provided for the question. Both settings therefore produce a score for every candidate and return the highest-scoring one. For simplicity, we explain the method using a two-model cascade consisting of a small model $\fs$ and a large model $\fl$. Model $m\in\{s,l\}$ produces logits $\bm{z}_m(x)\in\mathbb{R}^{K}$ and predicts $\hat y_m(x)=\arg\max_k \bm{z}_{m,k}(x)$.

The small model is always evaluated first. The system can accept its prediction or defer the input to the large model, after which it returns either the large-model prediction or a combination of both outputs. Let $c_s$ and $c_l$ denote the incremental costs of evaluating the small and large models. If $\rho$ is the fraction of inputs sent to the large model, the expected inference cost is
\begin{equation}
    \bar c=c_s+\rho c_l.
    \label{eq:pair-cost}
\end{equation}
Our goal is to reduce $\rho$ and hence $\bar c$ while maintaining or improving predictive performance relative to always using the large model.

\subsection{Calibrated Confidence for Cascading}

Due to differences in architecture, model scale, and data distribution, the same raw maximum probability may correspond to different empirical accuracies across heterogeneous models, making such probabilities not directly comparable.
CAUC addresses this problem by first calibrating each model so that confidence has a common correctness interpretation. It then uses calibrated confidence to decide whether the cascade should stop at the small model or invoke the large model. For deferred inputs, CAUC further checks whether the small-model output is complementary enough to retain. We first describe this two-model procedure and then introduce a recursive extension for longer model chains.

The cascade decision requires a score that means the same thing for both models. Native softmax probabilities often fail this requirement: one model can be overconfident while another is underconfident, so a shared raw-confidence threshold can accept samples with very different error rates. We therefore fit each model independently on a held-out calibration set $\Dcal=\{(x_i,y_i)\}_{i=1}^{N_{\mathrm{cal}}}$. Specifically, model $m$ receives one scalar temperature $T_m>0$, selected by minimizing its negative log likelihood on $\Dcal$:
\begin{equation}
    T_m=\arg\min_{T>0}-\frac{1}{N_{\mathrm{cal}}}
    \sum_{i=1}^{N_{\mathrm{cal}}}
    \log\!\left[\operatorname{softmax}\!\left(\bm{z}_m(x_i)/T\right)_{y_i}\right].
    \label{eq:temperature-fit}
\end{equation}
Both the image and language-model experiments use this standard scalar temperature scaling. For a new input $x$, the calibrated probability vector $\bm{\pi}_m$ and confidence $p_m$ are
\begin{equation}
\begin{aligned}
    \bm{\pi}_m(x)&=\operatorname{softmax}\!\left(\bm{z}_m(x)/T_m\right),\\
    p_m(x)&=\max_k\bm{\pi}_{m,k}(x).
\end{aligned}
    \label{eq:cal-score}
\end{equation}
Here, $p_m(x)$ estimates the probability that model $m$'s prediction on $x$ is correct. Since scaling the logits by a positive temperature preserves their ordering, calibration changes the confidence score without changing the predicted label.

CAUC uses the observed reliability of the large model as its stopping target. Let
\begin{equation}
    \widehat A_l=\frac{\left|\{i:\hat y_l(x_i)=y_i\}\right|}
    {N_{\mathrm{cal}}},
    \qquad \tau_0=\widehat A_l,
    \label{eq:single-threshold}
\end{equation}
where $\widehat A_l$ is the large model's empirical accuracy on $\Dcal$ and $\tau_0$ is the shared stopping threshold. CAUC accepts $\hat y_s(x)$ when $p_s(x)\geq\tau_0$; otherwise, it evaluates $\fl$. The rule has a direct interpretation: the small model answers only when its estimated probability of being correct reaches the reference accuracy observed for the large model. Since each confidence is calibrated to the same event, replacing either model requires fitting only its scalar temperature and recomputing the corresponding calibration statistics, rather than training an additional decision model.

\subsection{Selective Collaboration after Deferral}

Deferring an input does not imply that the small-model output is useless. Its prediction may correct errors made by the large model, but indiscriminate fusion can also replace correct large-model answers. CAUC decides whether to retain the small model by measuring this trade-off on $\Dcal$. Among calibration samples for which the small model is more confident than the large model, $p_s(x_i)>p_l(x_i)$, let $N_{\mathrm{gain}}$ count cases where the small model is correct and the large model is wrong, and let $N_{\mathrm{loss}}$ count the reverse cases. We define the calibration-set complementarity rate as
\begin{equation}
    {\mathrm{CR}}(s,l)=
    \frac{N_{\mathrm{gain}}-N_{\mathrm{loss}}}{N_{\mathrm{cal}}}.
    \label{eq:complementarity-rate}
\end{equation}
A positive value indicates that trusting the more confident small model produces more corrections than harmful replacements. CAUC therefore enables fusion only when ${\mathrm{CR}}(s,l)>0$; otherwise, every deferred input uses the large-model prediction directly. This sign test is performed once on the calibration set and introduces no learned decision network.

When fusion is enabled, we first place heterogeneous logits on comparable scales. Let $\bm{\ell}_m(x)=\bm{z}_m(x)/T_m$ be the calibrated logits. We estimate their global mean $\mu_m$ and standard deviation $\sigma_m$ on $\Dcal$ and form $\bm{q}_m(x)=(\bm{\ell}_m(x)-\mu_m)/(\sigma_m+\varepsilon)$, where $\varepsilon>0$ prevents division by zero. For a deferred sample, CAUC computes
\begin{equation}
\begin{aligned}
    \bm{z}_{\mathrm{fuse}}(x)
      &=\frac{p_s(x)\bm{q}_s(x)+p_l(x)\bm{q}_l(x)}{p_s(x)+p_l(x)},\\
    \hat y_{\mathrm{fuse}}(x)
      &=\arg\max_k z_{\mathrm{fuse},k}(x).
\end{aligned}
    \label{eq:cauc-fusion}
\end{equation}
Thus, $\bm{q}_s$ and $\bm{q}_l$ provide standardized class evidence, while $p_s$ and $p_l$ determine how strongly each model contributes on the current input. In summary, CAUC fits independent temperatures, sets the large-model reference threshold and complementarity sign on $\Dcal$, accepts reliable small-model predictions, and applies either direct fallback or selective fusion to the remaining inputs. These components form one calibration-driven cascade without a task-specific selector.

\subsection{Recursive Fusion Extension}

CAUC selectively fuses two model outputs after deferral. To preserve useful evidence across a longer model chain, we extend it to CAUC-RF, which recursively accumulates calibrated outputs while retaining the same stopping rule. Suppose $M$ models $f_1,\ldots,f_M$ are ordered from inexpensive to expensive, and let $\bm{\ell}_j=\bm{z}_j/T_j$ denote the individually calibrated logits of model $j$. CAUC-RF extends the endpoint fusion by maintaining a recursive score state. It starts with $\bm{r}_1=\bm{\ell}_1$ and, after evaluating model $j\geq2$, updates

\begin{equation}
    \bm{r}_j=\frac{\bm{r}_{j-1}/\alpha_j+\bm{\ell}_j/\beta_j}{2},
    \qquad \alpha_j,\beta_j>0.
    \label{eq:rf-state}
\end{equation}
The pair temperatures $\alpha_j$ and $\beta_j$ are fitted sequentially on $\Dcal$ by minimizing the NLL of $\operatorname{softmax}(\bm{r}_j)$. They control the relative scale of the accumulated evidence and the newly called model. The confidence after stage $j$ is
\begin{equation}
    p^{(j)}(x)=\max_k\operatorname{softmax}(\bm{r}_j(x))_k.
    \label{eq:rf-confidence}
\end{equation}

\begin{figure}[t]
\centering
\includegraphics[width=\columnwidth]{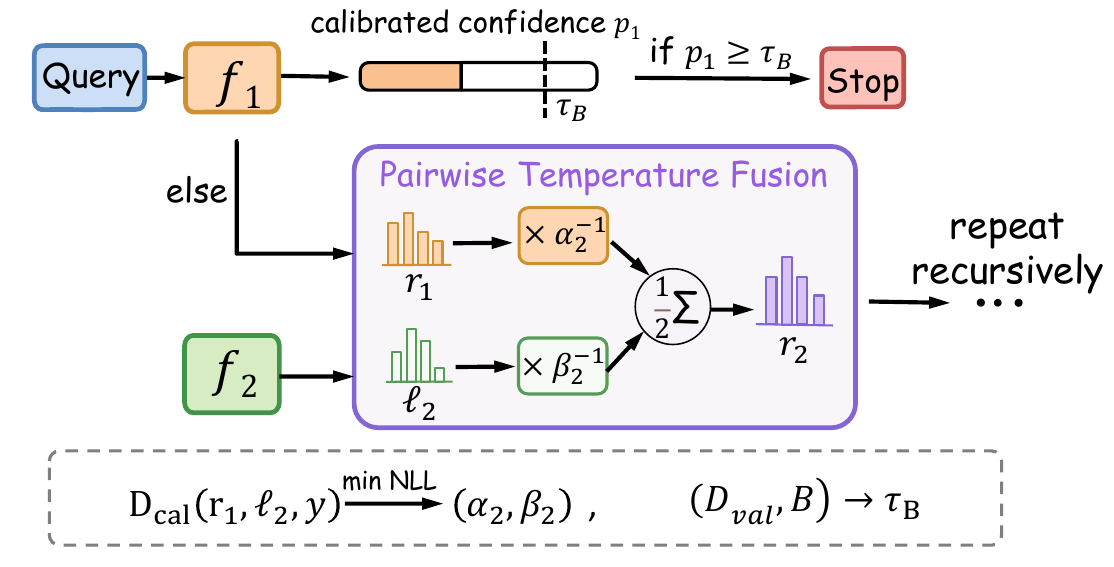}
\caption{CAUC-RF recursively fuses calibrated outputs using learned pairwise temperatures and applies a shared confidence threshold for early stopping.}
\label{fig:cauc-rf-method}
\end{figure}

Unlike the main CAUC rule, this extension can adapt its operating point to an explicit budget. On a calibration-independent cascade-validation set $\Dval$, it selects one threshold shared by all nonfinal stages:
\begin{equation}
    \tau_B\in\arg\max_{\tau:\,\widehat c_{\mathrm{val}}(\tau)\leq B}
    \widehat A_{\mathrm{val}}(\tau),
    \label{eq:rf-threshold}
\end{equation}
where $B$ is the deployment budget, $\widehat c_{\mathrm{val}}(\tau)$ is the measured cascade cost, and $\widehat A_{\mathrm{val}}(\tau)$ is the corresponding accuracy on $\Dval$. Inference stops at the first stage satisfying $p^{(j)}\geq\tau_B$; the final stage always returns its recursive prediction. We report this recursive, budget-tuned extension as \textbf{CAUC-RF}. It retains CAUC's calibrated-confidence interface and adds recursive evidence accumulation without introducing a learned selector.

\begin{table*}[!t]
\centering
\small
\setlength{\tabcolsep}{2.0pt}
\begin{tabular*}{\textwidth}{
@{\extracolsep{\fill}}
ll
rr rr rr rr rr rr
@{}
}
\toprule
\multirow{2}{*}{\textbf{Paradigm}} & \multirow{2}{*}{\textbf{Method}}
& \multicolumn{2}{c}{\textbf{MMLU}}
& \multicolumn{2}{c}{\textbf{LogiQA}}
& \multicolumn{2}{c}{\textbf{MathQA}}
& \multicolumn{2}{c}{\textbf{MedMCQA}}
& \multicolumn{2}{c}{\textbf{PIQA}}
& \multicolumn{2}{c}{\textbf{SocialIQA}} \\
\cmidrule(lr){3-4}\cmidrule(lr){5-6}
\cmidrule(lr){7-8}\cmidrule(lr){9-10}\cmidrule(lr){11-12}
\cmidrule(lr){13-14}
\multicolumn{2}{c}{} & Acc. & Cost & Acc. & Cost & Acc. & Cost
& Acc. & Cost & Acc. & Cost & Acc. & Cost \\
\midrule
\multirow{2}{*}{Single-Model} & Ministral-3-8B
& 75.94 & 8.00 & 57.45 & 8.00 & 44.44 & 8.00
& 64.06 & 8.00 & 86.34 & 8.00 & 73.61 & 8.00 \\
& Gemma-4-26B
& 79.08 & 26.00 & 58.22 & 26.00 & 45.89 & 26.00
& 65.48 & 26.00 & 89.61 & 26.00 & 76.44 & 26.00 \\
\midrule
\multirow{5}{*}{Router} & RouterDC
& 78.89 & 25.58 & 58.22 & 26.00 & 44.46 & 23.80
& 65.34 & 25.18 & 89.61 & 26.00 & 76.44 & 26.00 \\
& Benchmark Routing
& 78.34 & 16.28 & 58.06 & 14.47 & 45.33 & 15.55
& 64.81 & 16.60 & 87.81 & 35.35 & 75.22 & 13.60 \\
& RouteLLM
& 78.38 & 15.98 & 58.37 & 20.00 & 44.62 & 20.28
& 64.42 & 15.98 & 87.43 & 16.67 & 74.91 & 16.17 \\
& GraphRouter
& 76.22 & 9.66 & 56.68 & 11.20 & 43.65 & 8.04
& 64.45 & 13.50 & 86.67 & 9.23 & 73.61 & 8.89 \\
& EmbedLLM
& 78.09 & 18.78 & 56.68 & 19.40 & 44.62 & 15.97
& 65.48 & 21.80 & 89.06 & 19.71 & 75.54 & 19.57 \\
\midrule
\multirow{3}{*}{Cascade} & FrugalGPT
& 78.61 & 17.73 & 58.53 & 25.21 & 45.70 & 17.27
& 64.74 & 20.59 & 88.08 & 21.44 & 74.82 & 20.72 \\
& Margin Sampling
& 79.80 & 20.34 & 58.68 & 25.13 & 47.37 & 24.79
& 66.83 & 21.20 & 89.61 & 21.06 & 76.17 & 20.54 \\
& Post-Hoc-Embed
& 78.81 & 21.02 & 58.22 & 24.41 & 46.13 & 25.14
& 65.38 & 20.89 & 88.63 & 20.86 & 76.08 & 20.51 \\
\midrule
Ensemble & Asymmetric Duo
& \textbf{80.65} & 34.00 & \textbf{59.91} & 34.00 & \textbf{48.98} & 34.00
& \textbf{68.54} & 34.00 & \textbf{90.37} & 34.00 & \textbf{76.57} & 34.00 \\
\midrule
\multirow{2}{*}{Ours} 
& \textbf{CAUC}
& \underline{80.57} & 20.44 & 59.60 & 24.51 & \underline{47.98} & 24.20
& \underline{67.97} & 20.77 & 90.04 & 20.00 & 76.39 & 20.43 \\
& \textbf{CAUC-RF}
& 80.51 & 20.44 & \underline{59.91} & 24.51 & 47.70 & 24.20
& 67.90 & 20.77 & \underline{90.10} & 20.00 & \underline{76.53} & 20.43 \\
\bottomrule
\end{tabular*}
\caption{Two-model multiple-choice results for Ministral-3-8B $\rightarrow$ Gemma-4-26B. Accuracy is in percentage points; Cost is the average inference cost under the shared accounting. Bold indicates the highest overall accuracy, and underlining the highest accuracy among policies that can terminate without evaluating both models for every example.}
\label{tab:llm-main}
\end{table*}

\section{Theoretical Analysis}
We establish two theoretical analysis for calibrated thresholding in CAUC. The first relates the stopping threshold to accepted-prediction accuracy, while the second proves its near-optimality under a cost-sensitive objective.

\subsection{Why Calibration Matters}

Let $P\in[0,1]$ be a model's calibrated confidence on a randomly drawn input, and let $C$ equal one when its prediction is correct and zero otherwise. The model's actual accuracy at confidence $p$ is $\eta(p)=\Pr(C=1\mid P=p)$. We summarize its average calibration error by $\epsilon=\mathbb{E}|\eta(P)-P|$. For a stopping threshold $\tau$, let $q_\tau=\Pr(P\geq\tau)>0$ be the fraction of inputs accepted at this stage.

\begin{proposition}[Calibration controls accepted accuracy]
For every $\tau\in[0,1]$,
\begin{equation}
    \Pr(C=1\mid P\geq\tau)
    \geq \tau-\frac{\epsilon}{q_\tau}.
    \label{eq:cal-bound}
\end{equation}
\end{proposition}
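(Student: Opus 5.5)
The plan is to write the accepted accuracy as a ratio of expectations and then compare the numerator against the confidence itself. By the tower property, conditioning on $P$,
\begin{equation*}
\Pr(C=1\mid P\geq\tau)=\frac{\mathbb{E}[C\,\mathbf{1}\{P\geq\tau\}]}{q_\tau}=\frac{\mathbb{E}[\eta(P)\,\mathbf{1}\{P\geq\tau\}]}{q_\tau}.
\end{equation*}
The second equality holds because $\mathbf{1}\{P\geq\tau\}$ is $P$-measurable and $\mathbb{E}[C\mid P]=\eta(P)$. The hypothesis $q_\tau>0$ ensures that the conditional probability is well defined.

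The key step is to split $\eta(P)=P+(\eta(P)-P)$ inside the numerator. For the first piece, on the event $\{P\geq\tau\}$ we have $P\geq\tau$, so
\begin{equation*}
\mathbb{E}[P\,\mathbf{1}\{P\geq\tau\}]\geq\tau q_\tau .
\end{equation*}
For the second piece, we lower-bound by the negative absolute value and then drop the indicator, since the integrand is nonnegative:
\begin{equation*}
\mathbb{E}[(\eta(P)-P)\mathbf{1}\{P\geq\tau\}]\geq-\mathbb{E}[|\eta(P)-P|\,\mathbf{1}\{P\geq\tau\}]\geq-\epsilon .
\end{equation*}
Adding the two pieces and dividing by $q_\tau$ gives $\Pr(C=1\mid P\geq\tau)\geq\tau-\epsilon/q_\tau$, which is the bound in \eqref{eq:cal-bound}.

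No real obstacle is expected. The only point needing care is the measure-theoretic one: $\eta$ should be taken as a version of the regular conditional expectation $\mathbb{E}[C\mid P]$, so that the tower step is valid even when $P$ has atoms or a continuous distribution. I will state this convention explicitly.

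I will also remark that the bound is loose by the factor $1/q_\tau$, because the whole calibration error is charged to the accepted region. A sharper version would replace $\epsilon$ by $\mathbb{E}[|\eta(P)-P|\mathbf{1}\{P\geq\tau\}]$. For a perfectly calibrated model ($\epsilon=0$) the bound reduces to accepted accuracy at least $\tau$. This is exactly the selective-risk interpretation of the threshold $\tau_0=\widehat A_l$ in \eqref{eq:single-threshold}.
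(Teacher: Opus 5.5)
Your proof is correct and is the same argument as the paper's: the paper writes the accepted accuracy as $\mathbb{E}[\eta(P)\mid P\geq\tau]$, notes that the mean confidence on the accepted set is at least $\tau$, and bounds the conditional mean calibration gap there by $\epsilon/q_\tau$. Your split $\eta(P)=P+(\eta(P)-P)$ is that argument written before dividing by $q_\tau$, and it spells out the tower-property step and the bound $\mathbb{E}[|\eta(P)-P|\,\mathbf{1}\{P\geq\tau\}]\leq\epsilon$ that the paper leaves implicit.
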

\noindent\textit{Proof.}
The accepted accuracy is $\mathbb{E}[\eta(P)\mid P\geq\tau]$. The mean reported confidence in this group is at least $\tau$, while its mean calibration gap is at most $\epsilon/q_\tau$. Subtracting this gap gives Eq.~\ref{eq:cal-bound}. \hfill$\square$

This bound gives the stopping threshold an accuracy interpretation. Without calibration, a model could report confidence one on every input while having arbitrarily low accuracy, so the same threshold would provide no reliability guarantee. Calibration controls the accuracy of accepted samples; confidence discrimination determines how many samples can stop early.



\subsection{Near-Optimality of Unified-Threshold Cascading}

Consider the decision after the small model has been evaluated. Let $P_s$ be its calibrated confidence and let $\eta_s(p)$ be its true probability of being correct when it reports confidence $p$. Accepting the small model then has expected error $1-\eta_s(p)$. Deferring to the large model incurs
\begin{equation}
    L_{\mathrm{large}}=r_l+\lambda c_l,
\end{equation}
where $r_l$ is the large model's error rate, $c_l$ is its additional inference cost, and $\lambda\geq0$ specifies how strongly cost is penalized. We assume this fallback loss does not vary with $P_s$. For the nontrivial case $\tau^\star=1-L_{\mathrm{large}}\in[0,1]$, the better action is to accept the small model when $\eta_s(p)\geq\tau^\star$.

\begin{proposition}[Unified-threshold regret]
If the small model is perfectly calibrated, so that $\eta_s(p)=p$, accepting it when $P_s\geq\tau^\star$ is optimal. With average calibration error $\epsilon_s=\mathbb{E}|\eta_s(P_s)-P_s|$, the excess expected loss over an oracle that knows $\eta_s$ satisfies
\begin{equation}
    \mathcal R_{\mathrm{threshold}}-
    \mathcal R_{\mathrm{oracle}}\leq\epsilon_s.
    \label{eq:threshold-regret}
\end{equation}
\end{proposition}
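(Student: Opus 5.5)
The plan is to compare the threshold policy and the oracle pointwise in $p$, and then integrate over the distribution of $P_s$. Conditional on $P_s=p$, accepting the small model incurs expected loss $1-\eta_s(p)$, and deferring incurs the constant $L_{\mathrm{large}}=1-\tau^\star$. The oracle picks the smaller of the two:
\begin{equation*}
\ell_{\mathrm{oracle}}(p)=\min\{1-\eta_s(p),\,1-\tau^\star\}=1-\max\{\eta_s(p),\tau^\star\}.
\end{equation*}
The threshold policy accepts exactly when $p\geq\tau^\star$, so its pointwise loss is
\begin{equation*}
\ell_{\mathrm{thr}}(p)=\bigl(1-\eta_s(p)\bigr)\mathbf{1}[p\geq\tau^\star]+\bigl(1-\tau^\star\bigr)\mathbf{1}[p<\tau^\star].
\end{equation*}
Since $\mathcal R_{\mathrm{threshold}}=\mathbb{E}[\ell_{\mathrm{thr}}(P_s)]$ and $\mathcal R_{\mathrm{oracle}}=\mathbb{E}[\ell_{\mathrm{oracle}}(P_s)]$, it suffices to show the pointwise inequality
\begin{equation*}
\ell_{\mathrm{thr}}(p)-\ell_{\mathrm{oracle}}(p)\leq|\eta_s(p)-p|.
\end{equation*}
Taking expectations then yields Eq.~\ref{eq:threshold-regret} directly from the definition of $\epsilon_s$.

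The pointwise bound splits into four cases, according to whether the threshold rule and the oracle agree.

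\textbf{When they agree,} the regret is zero. This covers $p\geq\tau^\star$ with $\eta_s(p)\geq\tau^\star$, and $p<\tau^\star$ with $\eta_s(p)<\tau^\star$.

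\textbf{When $p\geq\tau^\star$ but $\eta_s(p)<\tau^\star$,} the threshold rule accepts while the oracle defers. The regret is
\begin{equation*}
(1-\eta_s(p))-(1-\tau^\star)=\tau^\star-\eta_s(p)\leq p-\eta_s(p)=|\eta_s(p)-p|.
\end{equation*}

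\textbf{When $p<\tau^\star$ but $\eta_s(p)\geq\tau^\star$,} the threshold rule defers while the oracle accepts. The regret is
\begin{equation*}
(1-\tau^\star)-(1-\eta_s(p))=\eta_s(p)-\tau^\star< \eta_s(p)-p=|\eta_s(p)-p|.
\end{equation*}

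In both disagreement cases, $\tau^\star$ lies between $p$ and $\eta_s(p)$, so the loss gap is at most the distance between them.

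The perfectly calibrated claim is then the special case $\eta_s(p)=p$. Here the two disagreement cases cannot occur, so the threshold rule coincides with the oracle pointwise and is therefore optimal.

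The main obstacle is measure-theoretic and interpretive rather than algebraic. First, the loss must be conditioned on $P_s$ alone, so that $\eta_s(P_s)$ is the correct conditional accuracy. Second, the fallback loss must be independent of $P_s$, which the statement grants as an assumption. I would state explicitly that the oracle is restricted to policies measurable in $P_s$, so that its loss is the pointwise minimum above, and that boundary ties at $p=\tau^\star$ do not affect the bound.
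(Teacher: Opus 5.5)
Your proof is correct and follows the paper's argument. You compare the two rules pointwise in $p$: regret is zero where they agree, and at most $|\eta_s(p)-p|$ where $\tau^\star$ separates $p$ from $\eta_s(p)$. You then average over $P_s$. The paper's appendix writes the same step as $\rho(p)\le|d(p)-\widehat d(p)|$, which also covers a confidence-dependent fallback loss.
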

\noindent\textit{Proof.}
The threshold rule and oracle differ only when $P_s$ and $\eta_s(P_s)$ fall on opposite sides of $\tau^\star$. Whenever this happens, the extra loss is no larger than $|\eta_s(P_s)-P_s|$. Averaging over inputs gives Eq.~\ref{eq:threshold-regret}; perfect calibration makes the gap zero. \hfill$\square$

The threshold depends on the fallback error and deployment cost, not on the small model architecture. It can therefore be reused after replacing the small model, provided that the new model is calibrated and the fallback setting remains unchanged. The CAUC default $\tau_0=\widehat A_l$ corresponds to matching the large model's error when cost is not explicitly penalized; CAUC-RF instead selects a budget-specific threshold on cascade-validation data.


\section{Experiments}

\subsection{Experimental Setup}

\begin{table*}[t]
\centering
\footnotesize
\setlength{\tabcolsep}{4pt}
\begin{tabular}{@{}ll*{4}{cc}@{}}
\toprule
{\multirow{2}{*}{\textbf{Paradigm}}} & {\multirow{2}{*}{\textbf{Method}}}
& \multicolumn{2}{c}{\textbf{Caltech256}}
& \multicolumn{2}{c}{\textbf{iWildCam}}
& \multicolumn{2}{c}{\textbf{iWildCam-OOD}}
& \multicolumn{2}{c}{\textbf{ImageNet}} \\
\cmidrule(lr){3-4}\cmidrule(lr){5-6}
\cmidrule(lr){7-8}\cmidrule(lr){9-10}
\multicolumn{2}{c}{} & Acc. & Rel. GFLOPs & F1 & Rel. GFLOPs
& F1 & Rel. GFLOPs & Acc. & Rel. GFLOPs \\
\midrule
\multirow{2}{*}{Single-Model} & MnasNet-0.75
& 84.84 & 0.05 & 33.09 & 0.05 & 22.07 & 0.05 & 70.94 & 0.05 \\
& ResNet-50
& 88.70 & 1.00 & 38.48 & 1.00 & 23.04 & 1.00 & 80.59 & 1.00 \\
\midrule
\multirow{2}{*}{Router} & Benchmark Routing
& 87.48 & 0.76 & 36.49 & 0.58 & 23.63 & 0.40 & 79.04 & 0.85 \\
& EmbedLLM
& 87.52 & 0.63 & 36.25 & 0.56 & 22.41 & 0.46 & 79.86 & 0.87 \\
\midrule
\multirow{2}{*}{Cascade} & Post-Hoc-Embed
& 87.24 & 0.47 & 36.14 & 0.64 & 22.03 & 0.69 & 77.68 & 0.55 \\
& Margin Sampling
& 88.85 & 0.45 & 38.23 & 0.57 & 23.11 & 0.63 & 79.99 & 0.55 \\
\midrule
Ensemble & Asymmetric Duo
& \underline{89.96} & 1.05 & 38.89 & 1.05 & 25.16 & 1.05 & \underline{80.49} & 1.05 \\
\midrule
\multirow{2}{*}{Ours} & \textbf{CAUC}
& 89.81 & 0.43 & \textbf{39.17} & 0.56 & \textbf{26.08} & 0.64 & 80.35 & 0.55 \\
& \textbf{CAUC-RF}
& \textbf{90.03} & 0.43 & \underline{38.98} & 0.56 & \underline{25.64} & 0.64 & \textbf{80.70} & 0.55 \\
\bottomrule
\end{tabular}
\caption{Two-model image classification results for MnasNet-0.75 $\rightarrow$ ResNet-50. Accuracy and macro-F1 are percentages; relative GFLOPs are normalized to ResNet-50. Bold and underlining indicate the highest and second-highest predictive performance, respectively, for each dataset setting.}
\label{tab:image-main}
\end{table*}

\paragraph{Benchmarks.}
The LLM evaluation uses six multiple-choice benchmarks: MMLU \citep{hendrycks2021mmlu}, LogiQA \citep{liu2020logiqa}, MathQA \citep{amini2019mathqa}, MedMCQA \citep{pal2022medmcqa}, PIQA \citep{bisk2020piqa}, and SocialIQA \citep{sap2019socialiqa}. The vision evaluation uses Caltech256 \citep{griffin2007caltech256}, iWildCam from WILDS \citep{koh2021wilds}, and ImageNet \citep{russakovsky2015imagenet}; iWildCam includes in-distribution (ID) and out-of-distribution (OOD) test settings.

\paragraph{Baselines and inference setting.}
Baselines include individual small and large models as cost and performance endpoints. The routing baselines are RouterDC \citep{NIPS2024routerDC}, Benchmark Routing \citep{shnitzer2024routing}, RouteLLM \citep{ong2025routellm}, GraphRouter \citep{ICLR2025graph}, and EmbedLLM \citep{ICLR2025embedllm}. We compare FrugalGPT \citep{TMLR2024frugalgpt}, Margin Sampling \citep{ramirez2024optimising}, and Post-Hoc-Embed \citep{gupta2024cascades} as cascades, and Asymmetric Duo \citep{zhou2025asymmetric} as an ensemble. Tunable routers use performance-first settings, whereas cascade baselines use approximately cost-matched operating points. All policies use the same examples and saved model outputs.

For datasets with test labels, we divide the original validation set equally into $\Dcal$ and $\Dval$, reserving the test set for final evaluation. PIQA and MedMCQA lack test labels, so their original validation sets serve as test sets. For each, an equally sized pool from the training set is divided equally into $\Dcal$ and $\Dval$.

\paragraph{Evaluation metrics.}
LLM quality uses exact-match multiple-choice accuracy. Caltech256 and ImageNet use top-1 accuracy, whereas ID and OOD iWildCam use macro-F1 due to class imbalance. Language-model cost is nominal parameter count in billions, a uniform proxy when invocation prices are unavailable. Vision-model cost uses forward-pass GFLOPs from TorchVision metadata, normalized within each cascade so that the larger model costs $1.0\times$.

\subsection{Main Results on Language Models}

Across the six tasks in Table~\ref{tab:llm-main}, CAUC-RF always improves over large-only inference, while Base improves on five and is effectively tied on SocialIQA. The stronger CAUC variant gains between 0.09 and 2.49 percentage points. Base and CAUC-RF average 70.43\% and 70.44\% accuracy at a mean cost of 21.73, compared with 69.12\% at 26.00 for large-only inference. Asymmetric Duo reaches 70.84\% at cost 34.00, so CAUC approaches ensemble accuracy below the large-model endpoint cost.

Figure~\ref{fig:llm-acc-cost-all-points} shows full budget sweeps on two representative tasks. Base and CAUC-RF share a routing schedule, so their separation at identical costs isolates the prediction rule. Both occupy the strongest high-accuracy region, with gains largely saturating at moderate cost while competing cascades plateau at lower accuracy.

\subsection{Main Results on Image Models}

In Table~\ref{tab:image-main}, CAUC achieves the best predictive performance in all four settings, with CAUC-RF leading on Caltech256 and ImageNet and Base on both iWildCam settings. Relative to always-on Asymmetric Duo, CAUC improves every reported metric while reducing relative GFLOPs from 1.05 to 0.43--0.64. CAUC also outperforms ResNet-50 on Caltech256 and both iWildCam settings, while CAUC-RF improves ImageNet accuracy by 0.12 points. The two variants have identical per-dataset costs, isolating the effect of their prediction policies.

CAUC was most compute-efficient when the small model was already accurate. On Caltech256 and ImageNet, it stopped more examples after MnasNet-0.75 and surpassed both routers at 0.43 and 0.55 relative GFLOPs. On iWildCam, MnasNet-0.75 achieved only 33.09 and 22.07 macro-F1 in the ID and OOD settings, raising CAUC's costs to 0.56 and 0.64. CAUC remained cost-competitive on ID but exceeded both routers on OOD, where single-model routing avoids the cascade's double evaluation of deferred examples.

\begin{figure}[t]
\centering
\includegraphics[width=\columnwidth]{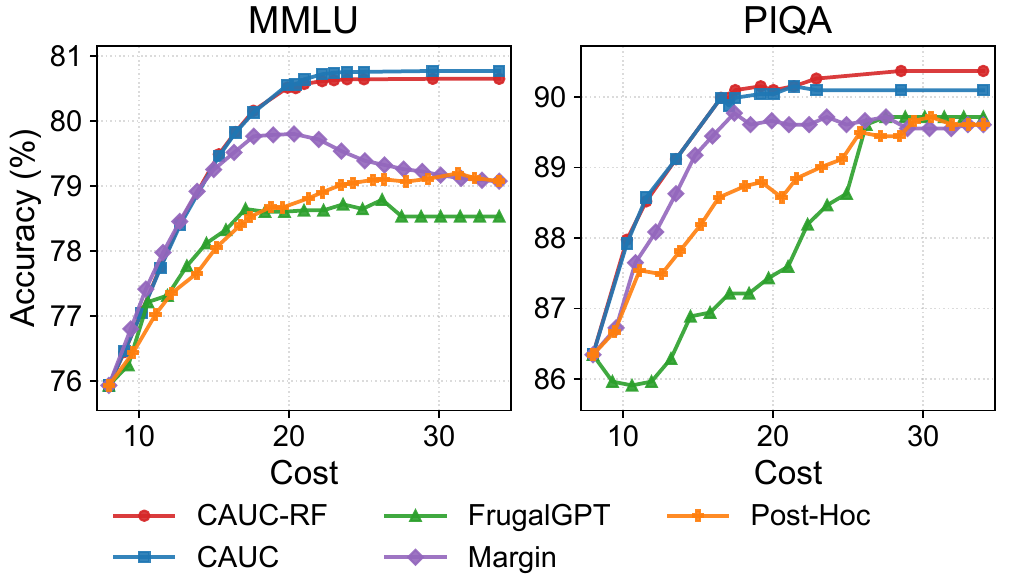}
\caption{Accuracy--cost trade-offs across operating points on MMLU and PIQA.}
\label{fig:llm-acc-cost-all-points}
\end{figure}

\subsection{Calibration and Threshold Reliability}

\begin{figure}[t]
\centering
\begin{minipage}[t]{0.495\linewidth}
\centering
\includegraphics[width=\linewidth,height=0.245\textheight,keepaspectratio]{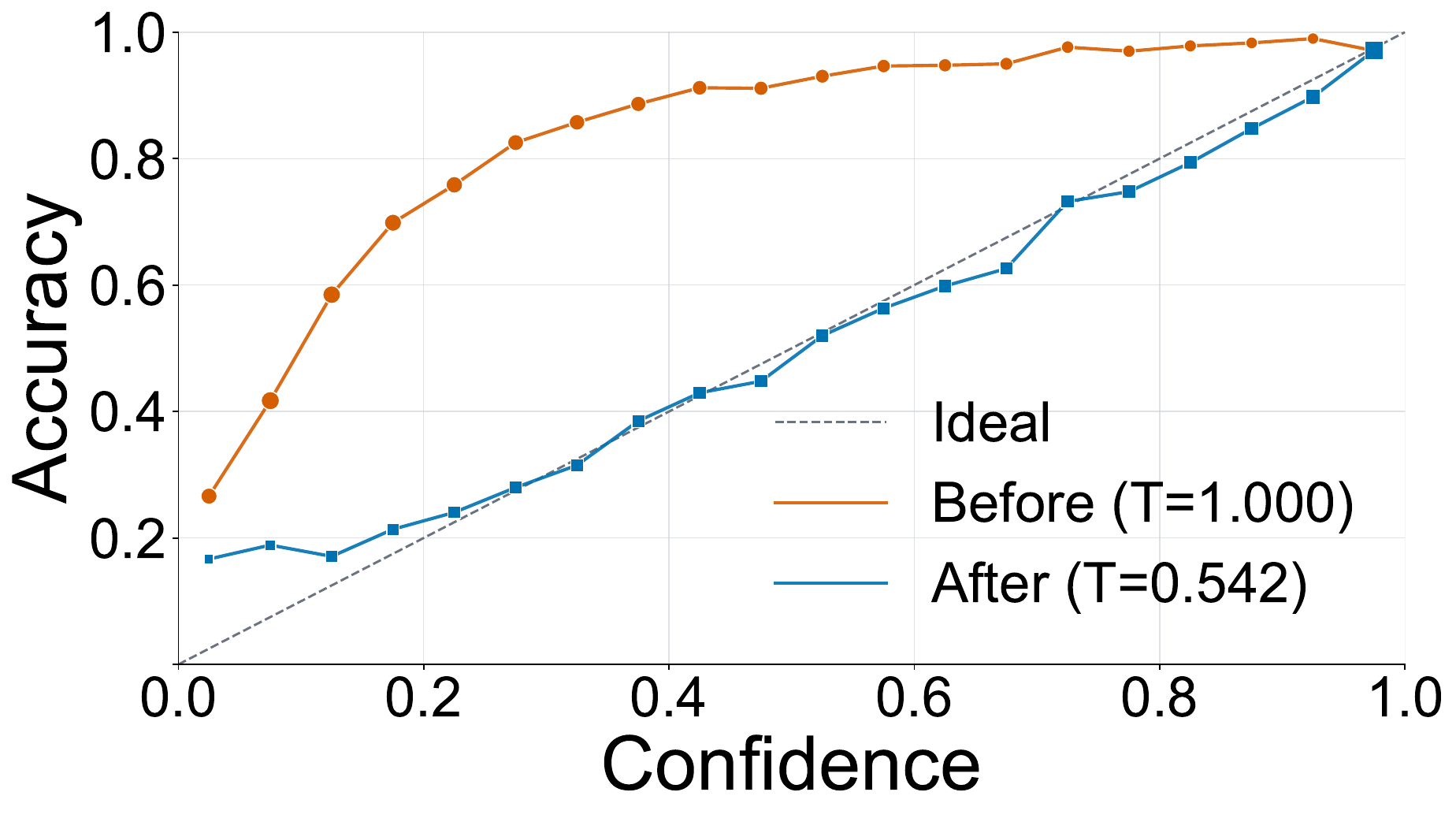}
\par\footnotesize (a) MnasNet-0.75 confidence before and after calibration.
\end{minipage}\hfill
\begin{minipage}[t]{0.495\linewidth}
\centering
\includegraphics[width=\linewidth,height=0.245\textheight,keepaspectratio]{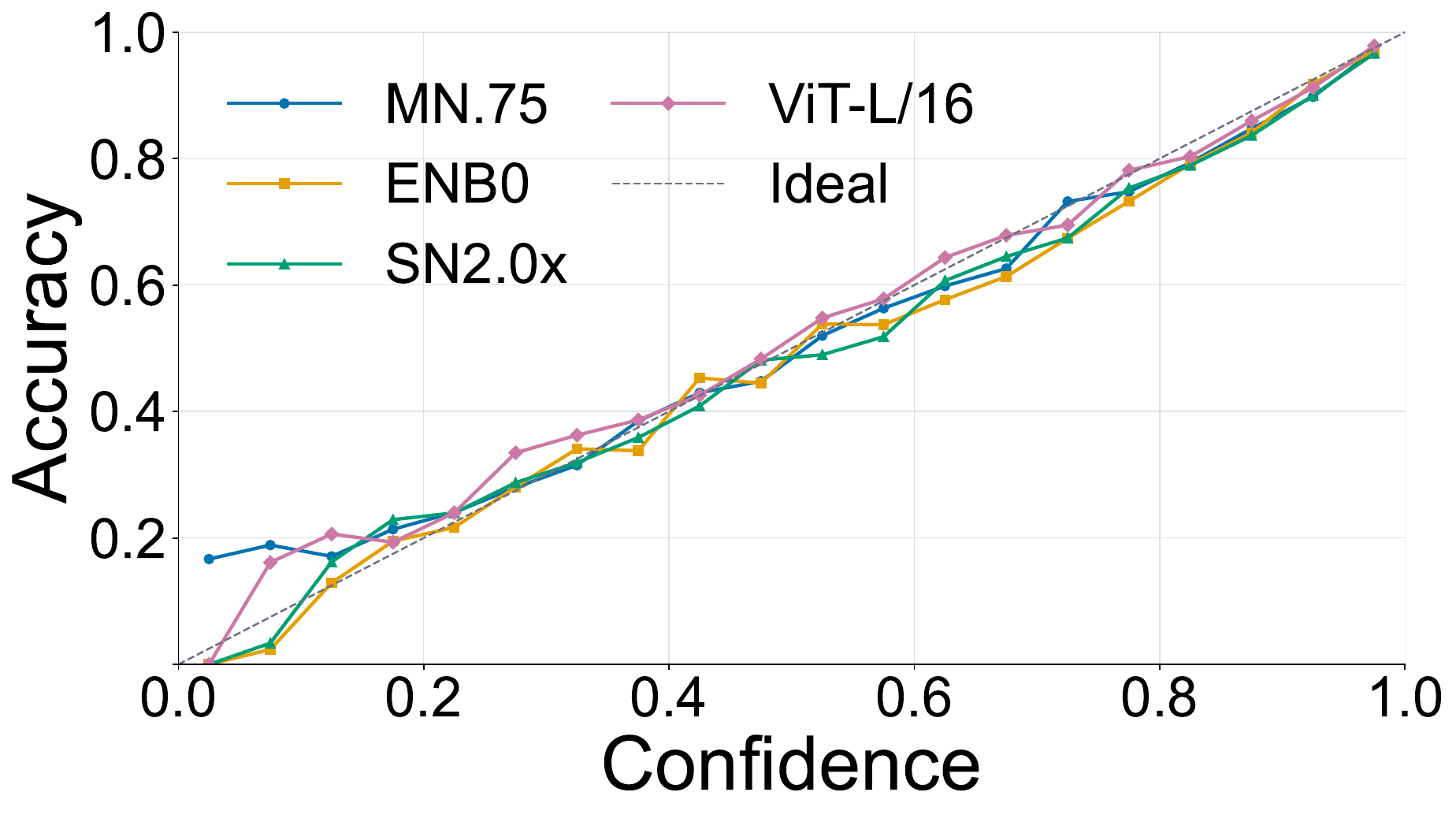}
\par\footnotesize (b) Accuracy of four models at matched calibrated confidence.
\end{minipage}

\vspace{0.6em}
\begin{minipage}[t]{0.495\linewidth}
\centering
\includegraphics[width=\linewidth]{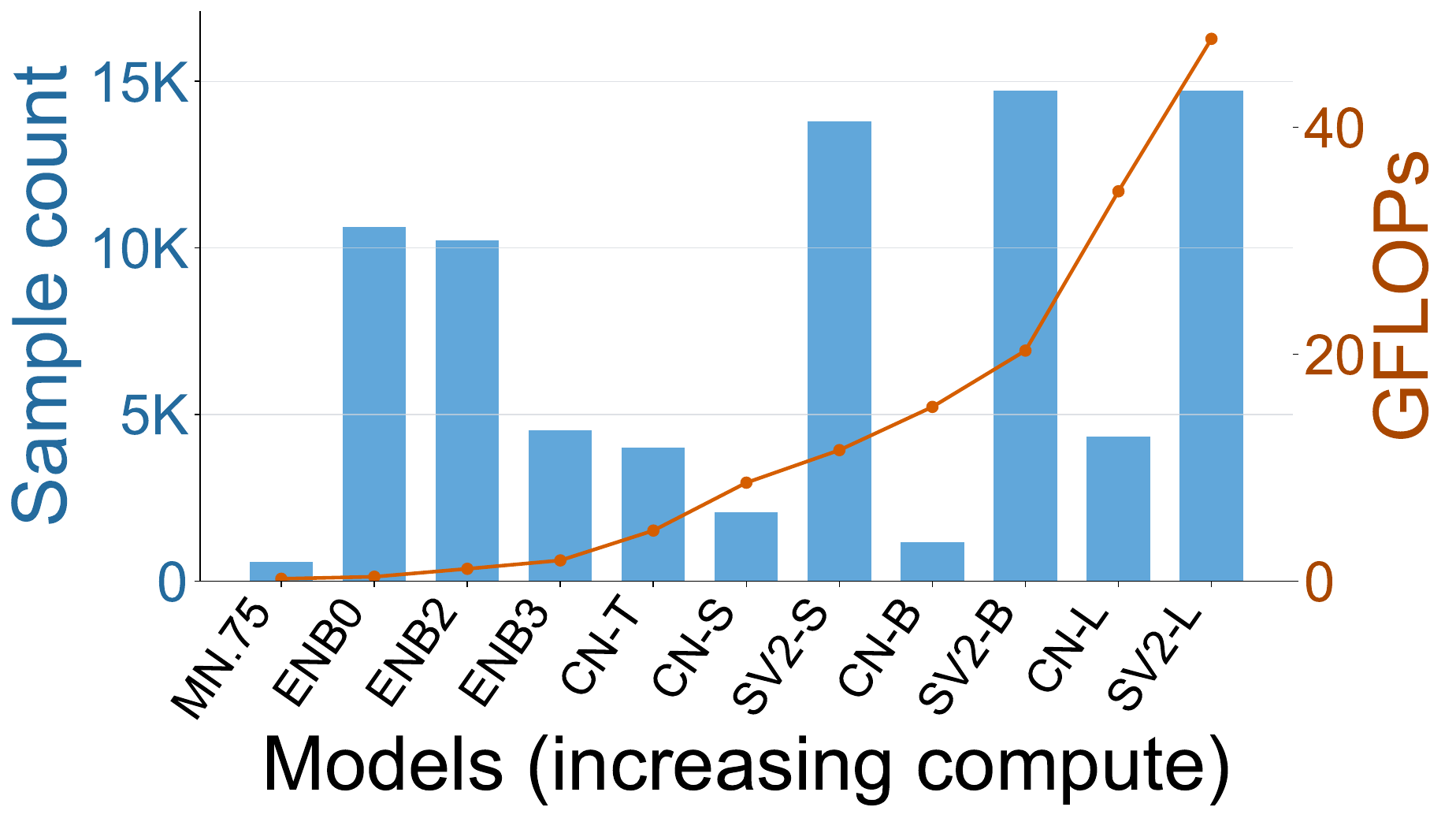}
\par\footnotesize (c) High-confidence sample counts before calibration.
\end{minipage}\hfill
\begin{minipage}[t]{0.495\linewidth}
\centering
\includegraphics[width=\linewidth]{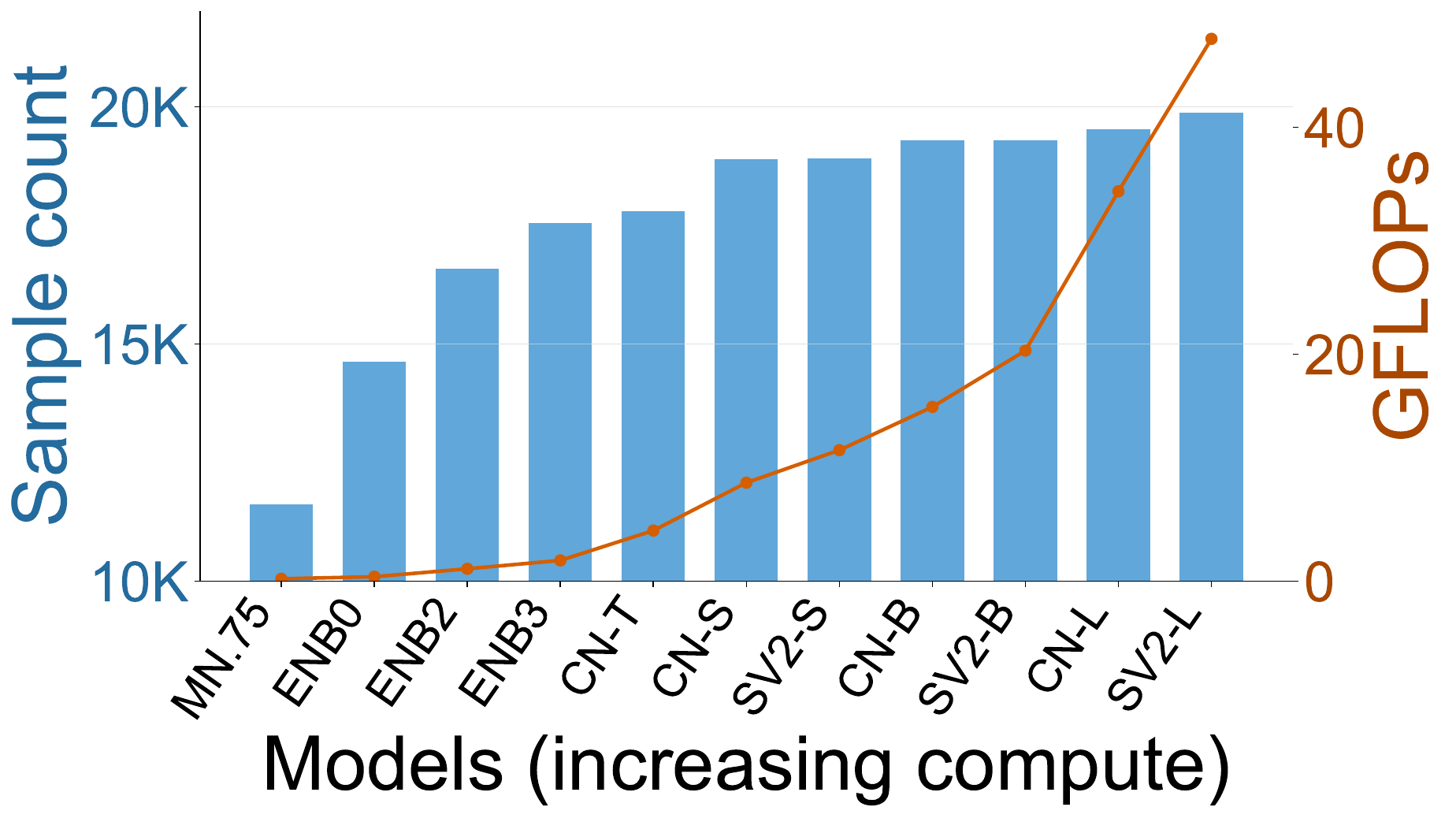}
\par\footnotesize (d) High-confidence sample counts after calibration.
\end{minipage}
\caption{Calibration reliability and high-confidence coverage on ImageNet. High-confidence counts use $[0.85,1.00]$, and the orange curve denotes model GFLOPs.}
\label{fig:calibration-effect}
\end{figure}

Figure~\ref{fig:calibration-effect}(a) shows that the uncalibrated MnasNet-0.75 curve lies above the diagonal across most confidence bins, indicating understated empirical correctness. Temperature scaling with $T=0.542$ moves the curve near the diagonal without changing the predicted class. Panel (b) shows that MnasNet-0.75, EfficientNet-B0, ShuffleNetV2-2.0x, and ViT-L/16 remain near a common confidence--accuracy relation after calibration. Calibrated scores therefore approximate correctness probabilities on a shared scale, supporting one reliability threshold across the cascade. Figures~\ref{fig:calibration-effect}(c) and (d) show that calibration makes high-confidence coverage increase more consistently with model compute. The shared cutoff therefore measures coverage at a common reliability target rather than model-specific score scaling.

\subsection{Complementarity-Guided Fusion}

\begin{table}[t]
\centering
\scriptsize
\setlength{\tabcolsep}{2.4pt}
\resizebox{\columnwidth}{!}{%
\begin{tabular}{@{}lrrrr@{}}
\toprule
\textbf{Model Pair} & \textbf{Direct} & \textbf{Fuse} & \textbf{CAUC-RF} & \textbf{CR (\%)} \\
\midrule
Ministral-3-3B $\rightarrow$ Ministral-3-8B
& \textbf{75.94} & 75.14 & 75.99 & $-1.70$ \\
Qwen3-VL-4B $\rightarrow$ Ministral-3-8B
& 75.94 & \textbf{76.48} & 76.73 & $+0.54$ \\
\midrule
Qwen3-4B $\rightarrow$ Qwen3-8B
& \textbf{74.93} & 74.66 & 75.09 & $-0.52$ \\
Ministral-3-3B $\rightarrow$ Qwen3-8B
& 74.93 & \textbf{75.50} & 75.64 & $+0.59$ \\
\midrule
Qwen3-4B $\rightarrow$ Ministral-3-14B
& \textbf{78.74} & 76.64 & 78.98 & $-1.70$ \\
Qwen3.5-4B $\rightarrow$ Ministral-3-14B
& 78.74 & \textbf{78.87} & 79.43 & $+0.20$ \\
\bottomrule
\end{tabular}%
}
\caption{
Confidence-weighted fusion for six LLM pairs on MMLU. Accuracy and CR are in percentage points. Bold marks the better of direct deferral and fusion; CAUC-RF is the cost-constrained reference.
}

\label{tab:fusion-admission}
\end{table}

\begin{figure}[t]
\centering
\includegraphics[width=0.9\columnwidth]{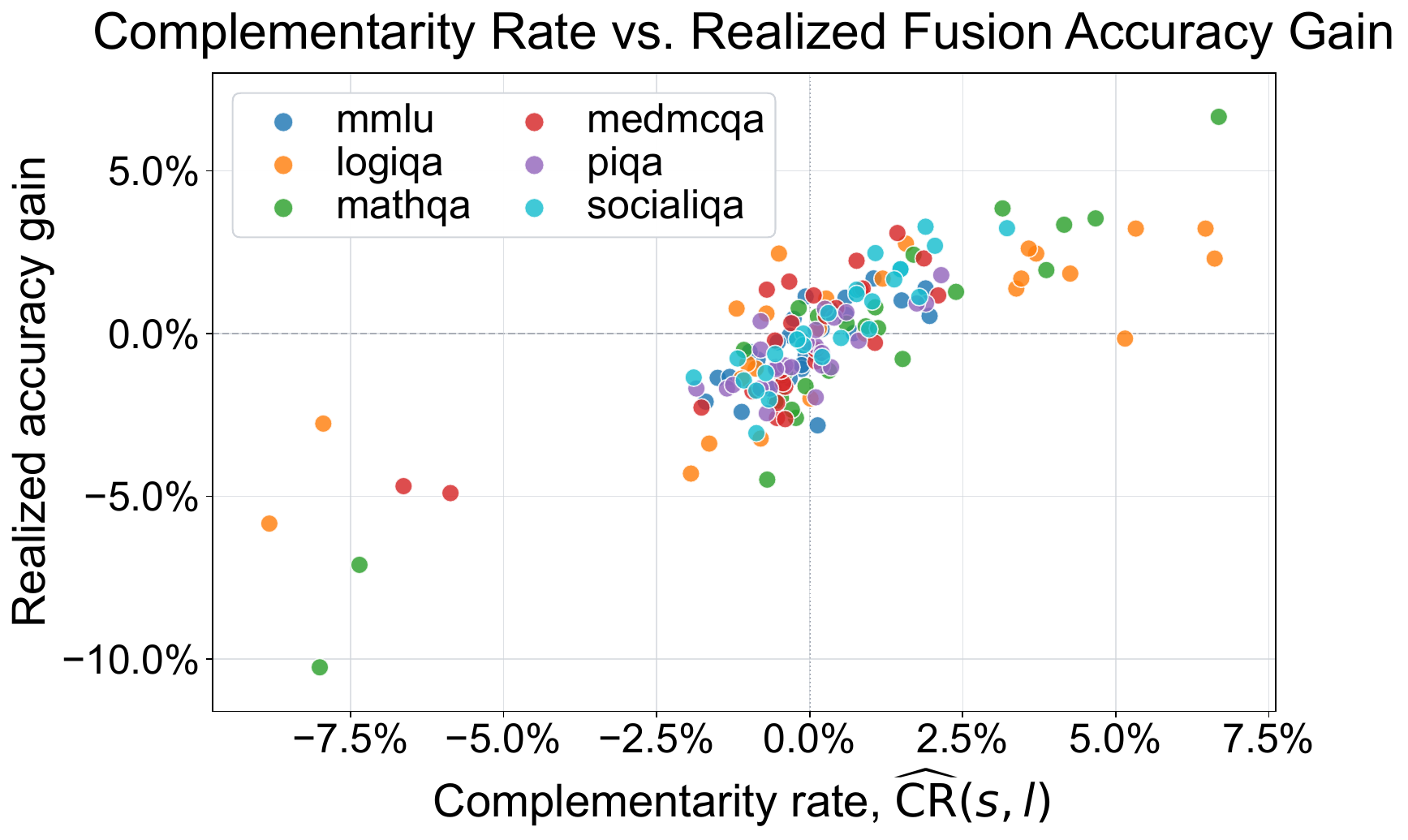}
\caption{Calibration-set complementarity rate versus realized accuracy gain from confidence-weighted fusion over direct deferral. Each point represents one dataset--model-pair evaluation across six LLM tasks; dashed lines indicate zero complementarity and zero gain.}
\label{fig:complementarity-gain}
\end{figure}

\begin{table}[t]
\centering
\scriptsize
\setlength{\tabcolsep}{2.4pt}
\renewcommand{\arraystretch}{0.80}
\resizebox{0.92\columnwidth}{!}{%
\begin{tabular}{@{}clrrrr@{}}
\toprule
\multirow{2}{*}{\textbf{\# Models}}& \multirow{2}{*}{\textbf{Newly Added Model}} & \multicolumn{2}{c}{\textbf{CAUC}} & \multicolumn{2}{c}{\textbf{CAUC-RF}} \\
\cmidrule(lr){3-4}\cmidrule(lr){5-6}
 & & Acc. & Cost & Acc. & Cost \\
\midrule
1 & Swin-V2-L (large only)
  & 86.74 & 47.80 & 86.74 & 47.80 \\
\midrule
2 & MnasNet-0.75
  & \textbf{86.22} & 28.93 & 86.26 & 28.93 \\
3 & EfficientNet-B0
  & 86.14 & 21.74 & 86.22 & 22.61 \\
4 & ResNet-50
  & 86.05 & \textbf{17.69} & 86.30 & \textbf{19.72} \\
5 & ConvNeXt-S
  & 85.96 & 18.83 & \textbf{86.48} & 21.75 \\
6 & Swin-V2-B
  & 85.73 & 19.55 & 86.34 & 24.12 \\
7 & ConvNeXt-B
  & 85.66 & 21.92 & 86.31 & 27.61 \\
\bottomrule
\end{tabular}%
}
\caption{
ImageNet results as the cascade grows from one to seven models, with Swin-V2-L fixed as the final endpoint. Each row adds the indicated model to the preceding cascade; the one-model row denotes large-only inference. 
}
\label{tab:multimodel-scale}
\end{table}

Using the same saved two-model outputs, we compare two CAUC variants, with either direct deferral or confidence-weighted fusion, and test whether the calibration-set complementarity rate predicts which is better. 
Across all six pairs in Table~\ref{tab:fusion-admission}, negative complementarity corresponds to higher direct-deferral accuracy, whereas positive complementarity corresponds to higher fusion accuracy. The zero-point rule therefore selects the better fixed fallback in every case, after which CAUC-RF improves accuracy by 0.05--0.56 points. Figure~\ref{fig:complementarity-gain} extends this comparison across model pairs and datasets. Although the signs predominantly align, some points cross the zero-gain boundary, supporting Eq.~\ref{eq:complementarity-rate} while showing that complementarity rate is an admission signal rather than a guarantee.

\subsection{Scaling to Longer Model Cascades}

We scale an ImageNet cascade from one to seven models, retaining Swin-V2-L as the endpoint, and measure accuracy and cost. 
Table~\ref{tab:multimodel-scale} shows that both policies minimize cost with four models, reducing the large-only cost by 63.0\% for CAUC and 58.8\% for CAUC-RF. CAUC-RF reaches its highest cascade accuracy of 86.48\% with five models, 0.26 points below large-only inference at less than half the cost. Further stages fail to improve accuracy and raise cost, indicating that marginal interception no longer offsets forward-pass cost; CAUC-RF remains more accurate than Base at a growing cost premium.
A fixed three-model LLM baseline comparison appears in the supplementary material.

\section{Conclusion}


CAUC uses independently calibrated confidence as a shared decision interface for heterogeneous model collaboration. It supports early acceptance, selective deferral, and confidence-weighted fusion without training a pool-specific router, while CAUC-RF recursively integrates predictions from earlier models in longer cascades. Our analysis links calibration error to accepted-sample reliability and bounds the regret of unified-threshold decisions. Across six LLM benchmarks, CAUC improves the accuracy--cost trade-off while avoiding about 47\% of strong-model calls. Across three image datasets, it maintains or improves predictive performance while reducing measured GFLOPs by up to 57\%. Fusion and multi-model experiments further demonstrate the benefits of model complementarity and the diminishing returns of adding excessive cascade stages.

\clearpage

\bibliography{uncertainty_cascade_draft}

\clearpage
\appendix
\setcounter{secnumdepth}{1}
\setcounter{equation}{0}
\setcounter{table}{0}
\setcounter{figure}{0}
\setcounter{proposition}{0}
\renewcommand{\theequation}{A\arabic{equation}}
\renewcommand{\thetable}{A\arabic{table}}
\renewcommand{\thefigure}{A\arabic{figure}}
\renewcommand{\theproposition}{A\arabic{proposition}}

\twocolumn[
\begin{center}
    {\LARGE \textbf{Appendix}}
\end{center}
\vspace{3em}
]

This appendix provides supplementary details for CAUC. Section~A presents
the complete algorithms for the Base and Recursive Fusion variants.
Section~B describes the models, dataset splits, baselines, and three-model
setting. Section~C reports additional language-model results for two- and
three-model cascades. Section~D compares method-specific setup times.
Finally, Section~E extends the unified-threshold analysis to
confidence-dependent fallback quality and proves the corresponding
performance-gap bound.

\section{Overall Algorithm Description}

\begin{algorithm}[h]
\footnotesize
\raggedright
\caption{Base Calibration-Aware Uncertainty Cascade}
\label{alg:cauc-base}
\begin{algorithmic}[1]
\REQUIRE Calibration set $\Dcal$; ordered models
$f_1,\ldots,f_M$; test input $x$
\ENSURE Prediction $\hat y(x)$
\FOR{$j=1,\ldots,M$}
    \STATE Fit $T_j>0$ by minimizing the NLL on $\Dcal$
    \STATE Set $\ell_j=z_j/T_j$, $\pi_j=\operatorname{softmax}(\ell_j)$,
    and $p_j=\max_k\pi_{j,k}$
    \STATE Estimate global $\mu_j,\sigma_j$ of $\ell_j$ on $\Dcal$
\ENDFOR
\STATE $\tau_0\leftarrow$ calibration accuracy of the final model $f_M$
\STATE $\mathcal S\leftarrow\{M\}$
\FOR{$j=1,\ldots,M-1$}
    \STATE Compute $\widehat{\mathrm{CR}}(j,M)
    =(N_{\mathrm{gain}}-N_{\mathrm{loss}})/|\Dcal|$
    \IF{$\widehat{\mathrm{CR}}(j,M)>0$}
        \STATE $\mathcal S\leftarrow\mathcal S\cup\{j\}$
    \ENDIF
\ENDFOR
\FOR{$j=1,\ldots,M-1$}
    \STATE Evaluate $f_j(x)$ and compute $\ell_j(x),\pi_j(x),p_j(x)$
    \IF{$p_j(x)\geq\tau_0$}
        \RETURN $\arg\max_k\ell_{j,k}(x)$
    \ENDIF
\ENDFOR
\STATE Evaluate $f_M(x)$ and compute $\ell_M(x),\pi_M(x),p_M(x)$
\STATE $q_j(x)\leftarrow
(\ell_j(x)-\mu_j)/(\sigma_j+\varepsilon)$ for every $j\in\mathcal S$
\STATE $z_{\mathrm{fuse}}(x)\leftarrow
\bigl(\sum_{j\in\mathcal S}p_j(x)q_j(x)\bigr)/
\bigl(\sum_{j\in\mathcal S}p_j(x)\bigr)$
\RETURN $\arg\max_k z_{\mathrm{fuse},k}(x)$
\end{algorithmic}
\end{algorithm}

\begin{algorithm}[h]
\footnotesize
\raggedright
\caption{CAUC with Recursive Fusion (RF)}
\label{alg:cauc-rf}
\begin{algorithmic}[1]
\REQUIRE Calibration set $\Dcal$; cascade-validation set $\Dval$;
ordered models $f_1,\ldots,f_M$; costs $c_1,\ldots,c_M$; budget $B$;
test input $x$
\ENSURE Prediction $\hat y(x)$
\FOR{$j=1,\ldots,M$}
    \STATE Fit $T_j>0$ by minimizing the NLL on $\Dcal$
    and set $\ell_j=z_j/T_j$
\ENDFOR
\STATE Set $r_1\leftarrow\ell_1$
\FOR{$j=2,\ldots,M$}
    \STATE Fit $\alpha_j,\beta_j>0$ on $\Dcal$ by minimizing the NLL of
    $\operatorname{softmax}\!\left(
    (r_{j-1}/\alpha_j+\ell_j/\beta_j)/2\right)$
    \STATE $r_j\leftarrow
    (r_{j-1}/\alpha_j+\ell_j/\beta_j)/2$
\ENDFOR
\STATE $\tau_B\leftarrow
\arg\max_{\tau:\,\widehat c_{\mathrm{val}}(\tau)\leq B}
\widehat A_{\mathrm{val}}(\tau)$ on $\Dval$
\STATE Evaluate $f_1(x)$ and set $r\leftarrow\ell_1(x)$
\FOR{$j=1,\ldots,M$}
    \IF{$j>1$}
        \STATE Evaluate $f_j(x)$ and set
        $r\leftarrow(r/\alpha_j+\ell_j(x)/\beta_j)/2$
    \ENDIF
    \STATE $p\leftarrow\max_k\operatorname{softmax}(r)_k$
    \IF{$p\geq\tau_B$ \OR $j=M$}
        \RETURN $\arg\max_k r_k$
    \ENDIF
\ENDFOR
\end{algorithmic}
\end{algorithm}

Algorithms~\ref{alg:cauc-base} and~\ref{alg:cauc-rf} summarize the two CAUC variants as complete offline--online procedures. Both variants first fit an independent temperature for every model on the calibration set, so that confidence has a comparable correctness interpretation across heterogeneous architectures, and then evaluate models from inexpensive to expensive. Base uses the final model's calibration accuracy as its early-exit threshold, retains an earlier model for deferred-example fusion only when its empirical complementarity rate is positive, and fuses the retained standardized logits using calibrated confidence weights. RF instead recalibrates and averages the accumulated prediction whenever a new model is invoked, and selects one shared stopping threshold on the independent cascade-validation set under budget \(B\). All temperatures, normalization statistics, complementarity decisions, fusion parameters, and stopping thresholds are fitted once before deployment; final evaluation labels are never used by either algorithm.

\begin{table*}[t]
\centering
\footnotesize
\setlength{\tabcolsep}{2.0pt}
\begin{tabular*}{\textwidth}{@{\extracolsep{\fill}}cl*{6}{rr}@{}}
\toprule
\multicolumn{2}{c}{\multirow{2}{*}{Method}}
& \multicolumn{2}{c}{MMLU}
& \multicolumn{2}{c}{LogiQA}
& \multicolumn{2}{c}{MathQA}
& \multicolumn{2}{c}{MedMCQA}
& \multicolumn{2}{c}{PIQA}
& \multicolumn{2}{c}{SocialIQA} \\
\cmidrule(lr){3-4}\cmidrule(lr){5-6}
\cmidrule(lr){7-8}\cmidrule(lr){9-10}\cmidrule(lr){11-12}
\cmidrule(lr){13-14}
\multicolumn{2}{c}{} & Acc. & Cost & Acc. & Cost & Acc. & Cost
& Acc. & Cost & Acc. & Cost & Acc. & Cost \\
\midrule
\multirow{3}{*}{Naive} & Qwen3.5-4B
& 72.34 & 4.00 & 58.53 & 4.00 & 36.18 & 4.00
& 58.10 & 4.00 & 85.69 & 4.00 & 76.35 & 4.00 \\
& Ministral-3-14B
& 78.74 & 14.00 & 56.53 & 14.00 & 50.08 & 14.00
& 66.48 & 14.00 & 85.85 & 14.00 & 75.67 & 14.00 \\
& Gemma-4-31B
& 82.22 & 31.00 & 59.75 & 31.00 & 50.75 & 31.00
& 69.46 & 31.00 & \textbf{92.55} & 31.00 & 75.85 & 31.00 \\
\midrule
\multirow{3}{*}{Router} & RouterDC
& 81.83 & 28.86 & 58.53 & 4.22 & 50.08 & 14.00
& 69.78 & 28.63 & \textbf{92.55} & 30.99 & 76.35 & 4.00 \\
& GraphRouter
& 72.57 & 4.79 & 57.76 & 8.35 & 36.18 & 4.00
& 58.88 & 5.81 & 85.91 & 4.78 & 76.21 & 4.09 \\
& EmbedLLM
& 79.04 & 19.10 & 59.91 & 17.77 & 50.62 & 16.39
& 67.93 & 19.49 & 90.21 & 23.57 & 75.99 & 19.68 \\
\midrule
Cascade & FrugalGPT
& 80.20 & 25.30 & 59.14 & 13.33 & 50.82 & 30.66
& 67.47 & 18.29 & 91.40 & 24.46 & 76.35 & 21.78 \\
\midrule
\multirow{2}{*}{CAUC (Ours)} & Base
& 82.95 & 24.65 & \textbf{60.68} & 25.14 & 51.91 & 32.02
& 70.17 & 27.02 & 91.78 & 23.61 & 78.42 & 18.09 \\
& RF
& \textbf{82.97} & 25.10 & 59.45 & 26.72 & \textbf{52.41} & 32.26
& \textbf{70.77} & 27.35 & 92.22 & 24.35 & \textbf{78.78} & 19.28 \\
\bottomrule
\end{tabular*}
\caption{Three-model results for Qwen3.5-4B $\rightarrow$
Ministral-3-14B $\rightarrow$ Gemma-4-31B. Accuracy is reported in
percentage points, and Cost follows the common evaluation accounting. Bold
marks the best accuracy in each task.}
\label{tab:multimodel-three}
\end{table*}

\section{Additional Experimental Details}

\paragraph{Models.}
The language experiments use models from the DeepSeek-R1-Distill, Gemma 4,
Ministral 3, Qwen3, Qwen3-VL, and Qwen3.5 families. The vision experiments
use models from the ConvNeXt, EfficientNet/EfficientNetV2, MnasNet, ResNet,
ShuffleNetV2, Swin-V2, and ViT families.

\paragraph{Dataset splits.}
For MMLU, LogiQA, MathQA, SocialIQA, Caltech256, and iWildCam, which provide
labeled test partitions in our evaluation setup, we divide the original
validation set equally into a calibration set and a cascade-validation set
for threshold selection, and reserve the official test set for final
evaluation. The official iWildCam ID and OOD test partitions are retained as
two separate final evaluation settings. PIQA and MedMCQA do not provide test
labels, so their original validation sets become the final evaluation sets;
for each dataset, we sample an equally sized pool from its training set and
divide that pool equally between calibration and threshold selection.
ImageNet likewise lacks public test labels, but its larger validation set
permits a direct three-way split: 25\% for calibration, 25\% for threshold
selection, and 50\% for final evaluation. Thus, no final evaluation example
is used to fit temperatures, choose fusion parameters, or select stopping
thresholds.

\paragraph{Baselines.}
\textbf{Small-only} evaluates the inexpensive model on every example and
serves as the minimum-cost endpoint, whereas \textbf{Large-only} always
evaluates the stronger model and serves as the single-model performance
endpoint.
\textbf{RouterDC} learns query and model embeddings with sample--model and
sample--sample contrastive objectives, then routes each query according to
their learned compatibility \citep{NIPS2024routerDC}.
\textbf{Benchmark Routing} reuses per-model correctness outcomes on existing
benchmarks to train binary performance predictors for selecting a model on a
new query \citep{shnitzer2024routing}.
\textbf{RouteLLM} learns a strong-versus-weak model routing decision from
human preference data so that easier queries can be sent to the cheaper
model \citep{ong2025routellm}.
\textbf{GraphRouter} represents tasks, queries, and LLMs in a heterogeneous
graph and predicts the performance and cost attributes of query--model edges
for routing \citep{ICLR2025graph}.
\textbf{EmbedLLM} first derives compact reusable representations of candidate
LLMs from their question--answer behavior and then trains a lightweight
downstream router on those representations \citep{ICLR2025embedllm}.
\textbf{FrugalGPT} learns an adaptive ordering and stopping policy that calls
LLM services sequentially while satisfying a user-specified budget
\citep{TMLR2024frugalgpt}.
\textbf{Margin Sampling} uses the small model's margin between its two
highest class scores as a nonparametric confidence measure and defers
low-margin examples to the large model \citep{ramirez2024optimising}.
\textbf{Post-Hoc-Embed} trains a post-hoc deferral predictor from token-level
uncertainty and representation features to decide when the stronger model
should be invoked \citep{gupta2024cascades}.
\textbf{Asymmetric Duo} evaluates a large model together with a smaller
sidekick and combines their predictions by learned weighted averaging
\citep{zhou2025asymmetric}; unlike a router or cascade, it therefore incurs
both model costs on every example.

\section{Additional Language-Model Results}

Table~\ref{tab:multimodel-three} evaluates the three-model Qwen3.5-4B
$\rightarrow$ Ministral-3-14B $\rightarrow$ Gemma-4-31B chain, whereas
Figure~\ref{fig:llm-acc-cost-remaining-tasks} completes the budget sweeps for
the main paper's two-model Ministral-3-8B $\rightarrow$ Gemma-4-26B system on
the other four tasks.

For the three-model setting, Table~\ref{tab:multimodel-three} shows that the
stronger CAUC variant
outperforms the largest single model on five of six tasks, with gains from
0.75 points on MMLU to 2.92 points on SocialIQA; on PIQA it is 0.33 points
lower. CAUC attains the best result on every task except PIQA. Averaged
across tasks, Base and RF reach 72.65\% and 72.77\% accuracy at costs
25.09 and 25.84, respectively, compared with 71.76\% at cost 31.00 for
Gemma-4-31B.

\begin{figure}[H]
\centering
\includegraphics[width=\columnwidth]{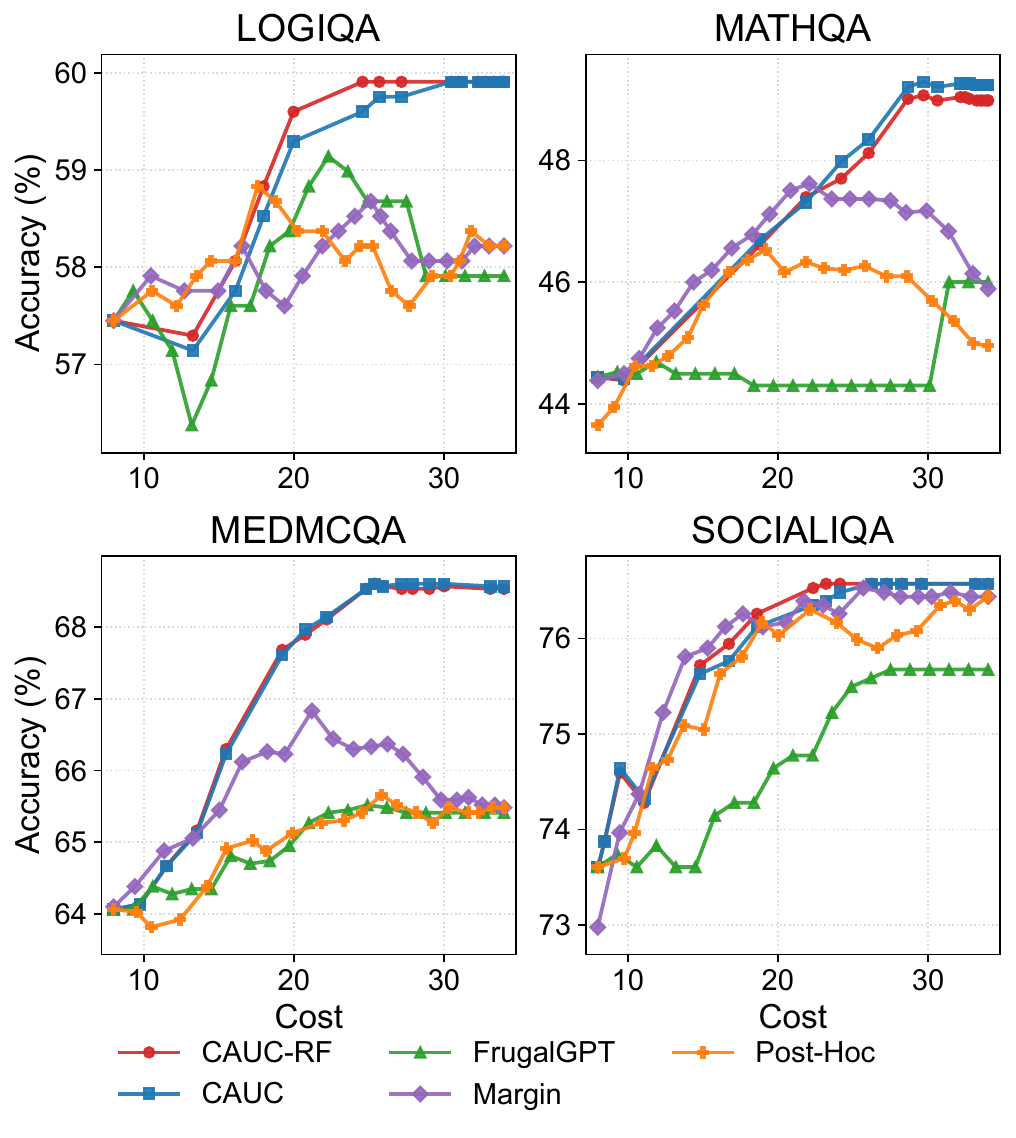}
\caption{Two-model accuracy--cost trade-offs for Ministral-3-8B
$\rightarrow$ Gemma-4-26B across operating points on LogiQA, MathQA,
MedMCQA, and SocialIQA, complementing the MMLU and PIQA results reported in
the main paper.}
\label{fig:llm-acc-cost-remaining-tasks}
\end{figure}

For the two-model setting, Figure~\ref{fig:llm-acc-cost-remaining-tasks}
shows Base and RF in the upper accuracy region at moderate-to-high costs on
all four tasks. Their advantage is clearest on MathQA and MedMCQA, while both
variants remain stable near 60\% accuracy on LogiQA. SocialIQA is the closest
comparison: Margin Sampling is competitive at intermediate costs, but CAUC
retains the highest endpoint. Thus, the selected results reflect the broader
accuracy--cost frontier.

\section{Method-Specific Setup Time}

\begin{table}[H]
\centering
\small
\setlength{\tabcolsep}{5pt}
\begin{tabular}{@{}llr@{}}
\toprule
Category & Method & Setup Time (s) $\downarrow$ \\
\midrule
Router & RouterDC & 4,891 \\
& Benchmark Routing & 156 \\
& RouteLLM & 161 \\
& GraphRouter & 449 \\
& EmbedLLM & 841 \\
\midrule
Cascade & FrugalGPT & 3,777 \\
& Margin Sampling & 16 \\
& Post-Hoc-Embed & 209 \\
\midrule
CAUC (Ours) & Base & \textbf{12} \\
& RF & 20 \\
\bottomrule
\end{tabular}
\caption{Offline runtime of method-specific policy construction. Time
includes router training or cascade calibration, fitting, and threshold
selection as required by each method. Base-model forward passes and
deployment inference are excluded. Bold marks the lowest setup time.}
\label{tab:runtime-comparison}
\end{table}

Table~\ref{tab:runtime-comparison} isolates the overhead of constructing the
routing or cascade policy from the cost of running the underlying models.
Base has the lowest setup time at 12 seconds, while RF requires 20
seconds; the additional time comes from selecting its budget-dependent
operating policy. Margin Sampling is the closest baseline at 16 seconds.
The learned routers require 156--4,891 seconds, making Base and RF
respectively 13.0--407.6$\times$ and 7.8--244.6$\times$ faster than this
group. The cascade baselines span 16--3,777 seconds: lightweight score
thresholding is inexpensive, whereas methods with broader search or fitting
procedures incur substantially larger offline cost. CAUC therefore adds
little method-specific preparation beyond calibration and, for RF,
development-set policy selection.

\section{Generalized Guarantee for Unified Thresholds}
\label{sec:supp-threshold-guarantee}

The second theoretical proposition in the main paper assumes that the fallback loss is constant. Here we define its two risks formally and extend the result to confidence-dependent fallback quality. The main-paper bound follows as a special case.

\subsection{Setup and Risk Definitions}

Let $P_s\in[0,1]$ be the small model's calibrated confidence, let $C_s$ indicate whether its prediction is correct, and define
\begin{equation}
    \eta_s(p)=\Pr(C_s=1\mid P_s=p),
    \qquad \ell_s(p)=1-\eta_s(p).
    \label{eq:supp-small-loss}
\end{equation}
The fallback is the fixed policy applied after deferral, such as the large-model prediction or a predetermined fusion rule. If its conditional error is $r_{\mathrm{fb}}(p)$, its cost-sensitive loss is
\begin{equation}
    L_{\mathrm{fb}}(p)=r_{\mathrm{fb}}(p)+\lambda c_l,
    \label{eq:supp-fallback-loss}
\end{equation}
where $c_l$ is the incremental large-model cost and $\lambda\geq0$ is its loss weight.

Fix a reference loss $\bar L\in[0,1]$ and let $\bar\tau=1-\bar L$. The unified-threshold policy accepts the small model when $P_s\geq\bar\tau$. Its risk is
\begin{equation}
\begin{aligned}
    \mathcal R_{\mathrm{threshold}}
    =\mathbb E\!\left[
      \ell_s(P_s)\mathbf 1\{P_s\geq\bar\tau\}
      +L_{\mathrm{fb}}(P_s)\mathbf 1\{P_s<\bar\tau\}
    \right].
    \label{eq:supp-threshold-risk}
\end{aligned}
\end{equation}
We compare it with a score-aware oracle that knows the two conditional loss functions but, like the threshold policy, may use only $P_s$ to choose between acceptance and fallback. It does not observe the true label of an individual example. Its risk is
\begin{equation}
    \mathcal R_{\mathrm{oracle}}
    =\mathbb E\!\left[
      \min\{\ell_s(P_s),L_{\mathrm{fb}}(P_s)\}
    \right].
    \label{eq:supp-oracle-risk}
\end{equation}

\subsection{Performance Guarantee}

\begin{proposition}[Unified-threshold performance gap]
\label{prop:supp-threshold-gap}
Let
\begin{equation}
\begin{aligned}
    \epsilon_s&=\mathbb E|\eta_s(P_s)-P_s|,\\
    \Delta_{\mathrm{fb}}(\bar L)
    &=\mathbb E|L_{\mathrm{fb}}(P_s)-\bar L|.
\end{aligned}
\end{equation}
Then
\begin{equation}
    0\leq
    \mathcal R_{\mathrm{threshold}}-
    \mathcal R_{\mathrm{oracle}}
    \leq\epsilon_s+\Delta_{\mathrm{fb}}(\bar L).
    \label{eq:supp-general-threshold-regret}
\end{equation}
\end{proposition}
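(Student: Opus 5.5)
The plan is to work pointwise in the score $p$: write the regret as the expectation of a nonnegative per-score gap, and bound that gap by $|\eta_s(p)-p|+|L_{\mathrm{fb}}(p)-\bar L|$.

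Define
\[
g(p)=\ell_s(p)\mathbf 1\{p\geq\bar\tau\}+L_{\mathrm{fb}}(p)\mathbf 1\{p<\bar\tau\}-\min\{\ell_s(p),L_{\mathrm{fb}}(p)\}.
\]
By \eqref{eq:supp-threshold-risk} and \eqref{eq:supp-oracle-risk}, $\mathcal R_{\mathrm{threshold}}-\mathcal R_{\mathrm{oracle}}=\mathbb E[g(P_s)]$. Both risks are expectations of bounded measurable functions of $P_s$, so linearity applies.

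The lower bound is immediate. At every $p$, $g(p)$ equals one of the two losses minus their minimum, so $g(p)\geq 0$.

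For the upper bound, note that $\ell_s(p)-L_{\mathrm{fb}}(p)=(1-\eta_s(p))-L_{\mathrm{fb}}(p)$. I rewrite this through the proxy comparison, since $1-p-\bar L=\bar\tau-p$:
\[
\ell_s(p)-L_{\mathrm{fb}}(p)=(\bar\tau-p)+\bigl(p-\eta_s(p)\bigr)+\bigl(\bar L-L_{\mathrm{fb}}(p)\bigr).
\]
Now split into cases.

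\textbf{Case $p\geq\bar\tau$ (accept).} If $\ell_s\leq L_{\mathrm{fb}}$, then $g=0$. Otherwise $g=\ell_s-L_{\mathrm{fb}}>0$. Because $\bar\tau-p\leq0$, the identity gives $g\leq (p-\eta_s(p))+(\bar L-L_{\mathrm{fb}}(p))\leq|\eta_s(p)-p|+|L_{\mathrm{fb}}(p)-\bar L|$.

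\textbf{Case $p<\bar\tau$ (defer).} Symmetrically, $g$ is either $0$ or $L_{\mathrm{fb}}-\ell_s$. Since $\bar\tau-p>0$, the identity gives $L_{\mathrm{fb}}-\ell_s\leq(\eta_s(p)-p)+(L_{\mathrm{fb}}(p)-\bar L)$, which is bounded by the same sum of absolute values.

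Taking expectations yields $\mathbb E[g(P_s)]\leq\epsilon_s+\Delta_{\mathrm{fb}}(\bar L)$, which is \eqref{eq:supp-general-threshold-regret}.

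The only delicate step is using the sign of $\bar\tau-p$ in each case to drop the proxy term. One must check that a disagreement between the threshold and the oracle forces the true-loss difference and the proxy difference $\bar\tau-p$ to have opposite (weak) signs. This makes the whole gap attributable to the calibration error and the fallback deviation, and the boundary case $p=\bar\tau$ falls under acceptance with proxy term $0$.

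The main-paper result, \eqref{eq:threshold-regret}, is the special case where $L_{\mathrm{fb}}\equiv L_{\mathrm{large}}$ and $\bar L=L_{\mathrm{large}}$, so $\Delta_{\mathrm{fb}}=0$. Perfect calibration additionally gives $\epsilon_s=0$, so the threshold rule is optimal.
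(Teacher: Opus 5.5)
Your proposal is correct and takes essentially the same route as the paper. The paper's proof also sets the true difference $d(p)=\ell_s(p)-L_{\mathrm{fb}}(p)$ against the surrogate $\widehat d(p)=1-p-\bar L=\bar\tau-p$, notes that disagreement forces $d(p)\widehat d(p)\leq 0$ so the excess is $|d(p)|\leq|d(p)-\widehat d(p)|$, and then applies the triangle inequality and averages; your identity with its sign-based case split is this same argument written out in full.
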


\noindent\textit{Proof.}
For a fixed confidence $p$, define the true and surrogate differences between the acceptance and fallback losses as
\begin{equation}
\begin{aligned}
    d(p)&=1-\eta_s(p)-L_{\mathrm{fb}}(p),\\
    \widehat d(p)&=1-p-\bar L.
\end{aligned}
\label{eq:supp-loss-differences}
\end{equation}
The oracle accepts when $d(p)\leq0$, whereas the threshold policy accepts when $\widehat d(p)\leq0$, equivalently when $p\geq\bar\tau$. Let $\rho(p)$ denote the threshold policy's conditional excess loss. If the two policies agree, then $\rho(p)=0$. If they disagree, then $d(p)\widehat d(p)\leq0$ and $\rho(p)=|d(p)|$, which implies
\begin{equation}
\begin{aligned}
    \rho(p)
    &\leq|d(p)-\widehat d(p)|\\
    &\leq|\eta_s(p)-p|
      +|L_{\mathrm{fb}}(p)-\bar L|.
    \label{eq:supp-pointwise-regret}
\end{aligned}
\end{equation}
Taking expectation over $P_s$ proves the upper bound. Nonnegativity follows because the oracle minimizes conditional loss at every confidence value. \hfill$\square$

\subsection{Relation to the Main-Paper Result}

The main paper considers
\begin{equation}
    L_{\mathrm{fb}}(p)=\bar L
    =L_{\mathrm{large}}=r_l+\lambda c_l.
\end{equation}
Then $\bar\tau=1-L_{\mathrm{large}}=\tau^\star$ and $\Delta_{\mathrm{fb}}(\bar L)=0$. Equations~\ref{eq:supp-threshold-risk} and~\ref{eq:supp-oracle-risk} therefore give the precise meanings of $\mathcal R_{\mathrm{threshold}}$ and $\mathcal R_{\mathrm{oracle}}$ in the main paper, while Proposition~\ref{prop:supp-threshold-gap} reduces to
\begin{equation}
    \mathcal R_{\mathrm{threshold}}
    -\mathcal R_{\mathrm{oracle}}
    \leq\epsilon_s.
    \label{eq:supp-main-bound}
\end{equation}
Under perfect calibration, the threshold rule is consequently optimal among stopping policies based only on $P_s$.

For direct large-model fallback with $\lambda=0$, $\tau^\star=1-r_l=A_l$, and CAUC substitutes the calibration-set estimate $\widehat A_l$. If the fallback instead uses fusion, $\Delta_{\mathrm{fb}}(\bar L)$ captures variation and mismatch in its conditional quality. Thus, a unified threshold is near-optimal when both calibration error and fallback variation are small. This result concerns only the stopping decision; it does not establish optimality of the fusion rule or the recursive CAUC-RF extension, nor does it include finite-sample estimation error.

\end{document}